\newtheorem{proposition}{Proposition}
\newtheorem{theorem}{Theorem}
\begin{document}

\title{\Large Brain EEG Time Series Selection: A Novel Graph-Based Approach for Classification}

\author{\IEEEauthorblockN{Chenglong Dai\IEEEauthorrefmark{1},
		Jia Wu\IEEEauthorrefmark{2},
		Dechang Pi\IEEEauthorrefmark{1} and
		Lin Cui\IEEEauthorrefmark{1}\IEEEauthorrefmark{3}}
	\IEEEauthorblockA{\IEEEauthorrefmark{1}
		Computer Science and Technology, Nanjing University of Aeronautics and Astronautics, Nanjing 211106, China \\
	}
	\IEEEauthorblockA{\IEEEauthorrefmark{2}
		Department of Computing, Macquarie University, NSW 2109, Australia \\}
\IEEEauthorblockA{\IEEEauthorrefmark{3}
		Intelligent Information Processing Laboratory, Suzhou University, Suzhou 234000, China \\}
	\IEEEauthorblockA{\IEEEauthorrefmark{0} Email: \{chenglongdai, dc.pi, jsjxcuilin\}@nuaa.edu.cn;wujiawb@gmail.com  }}

\maketitle

\begin{abstract} \small\baselineskip=9pt Brain Electroencephalography (EEG) classification is widely applied to analyze cerebral diseases in recent years. Unfortunately, invalid/noisy EEGs degrade the diagnosis performance and most previously developed methods ignore the necessity of EEG selection for classification. To this end, this paper proposes a novel maximum weight clique-based EEG selection approach, named mwcEEGs, to map EEG selection to searching maximum similarity-weighted cliques from an improved Fr\'{e}chet distance-weighted undirected EEG graph simultaneously considering edge weights and vertex weights. Our mwcEEGs improves the classification performance by selecting intra-clique pairwise similar and inter-clique discriminative EEGs with similarity threshold $\delta$. Experimental results demonstrate the algorithm effectiveness  compared with the state-of-the-art time series selection algorithms on real-world EEG datasets.\end{abstract}

\begin{figure}[!t]
\centering
\includegraphics[width=3in]{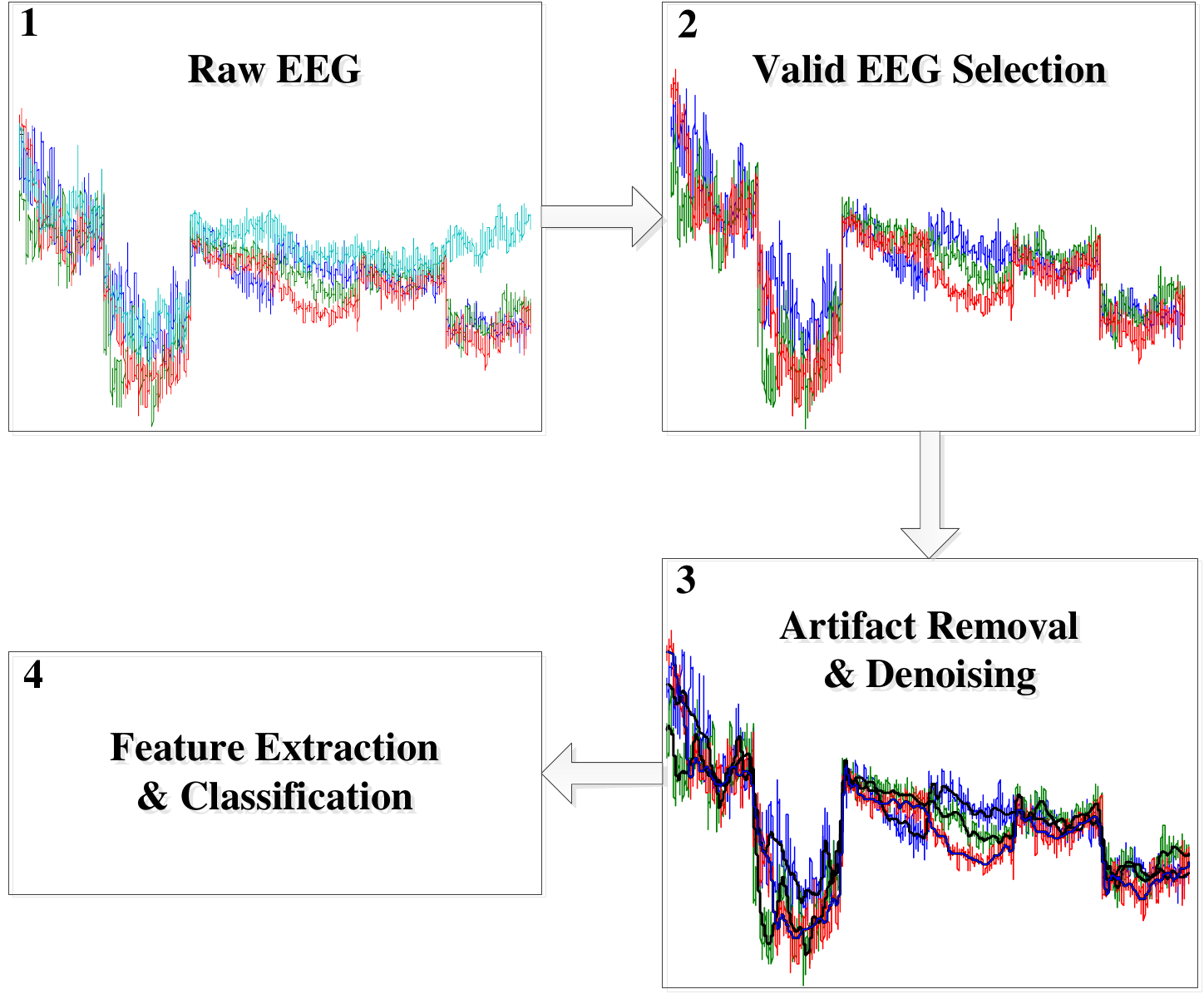}
\caption{A simplified framework of EEG analysis. 1 illustrates 4 raw EEGs including one invalid EEG (light blue); 2 shows the EEG selection that the light blue one is selected out; 3 indicates artifact removal or denoising with selected EEGs, in which the black lines demonstrate the processed results; 4 displays the final EEG classification.}
\label{fig:process}
\vspace{-0.2in}
\end{figure}

\section{Introduction} \label{introduction} In a noninvasive way, Brain Electroencephalography (EEG) classification is widely used to diagnose Alzheimer's disease \cite{Ref3}, epileptic seizure \cite{Ref23}, stroke \cite{Ref9}, and so on. But in such cerebral disease diagnosis, invalid/noisy EEGs significantly affect the diagnosis accuracy, since the invalid/noisy EEGs degrade the distinction of target features. Invalid/noisy EEG is stimulated by the non-target brain activities, whose contour shape is dissimilar with those of target ones stimulated by the specific brain activities. Invalid EEGs are mainly from (1) the environmental noises which are always ignored when analyzed and (2) the non-target bioelectrical potentials. Actually, more invalid EEGs mix in raw EEGs of those patients suffering with epilepsy, Alzheimer's disease, stroke, amyotrophic lateral sclerosis (ALS), etc., due to the uncontrolled neural actions in their brain. To improve the diagnosis accuracy, independent component analysis (ICA) \cite{Ref11}, principal component analysis (PCA) \cite{Ref10}, common spatial pattern (CSP)\cite{Ref27}, blind source separation (BSS) \cite{Ref26}, and wavelet transform (WT) \cite{Ref18} mainly consider the artifact removal and they improve the accuracy in some extent, but they ignore the impact of invalid/noisy EEGs to the follow-up analyses such as EEG artifact removal, denoising, feature extraction and classification. In another word, EEG selection is the most advance process for EEG analyses, especially for EEG classification, as Figure \ref{fig:process} illustrates. Furthermore, EEG selection aims to reduce the invalid/noisy EEGs stimulated by non-target brain activities.

EEG selection is a source control to reduce the degradation from the invalid ones. As far as we know, none of existing previous researches focused on EEG selection, and they jumped this step to artifact removal, feature extraction and classification, seeing Figure \ref{fig:process}. We study EEG selection based on maximum weight clique in the work, providing more target EEGs for EEG classification. To the best of our knowledge, this is the first try to select EEGs with maximum weight clique for its further classification.

This paper presents a novel EEG selection. It aims to map EEG selection to searching maximum weight cliques in a similarity-weighted EEG graph in such a way that EEGs in the same clique are more similar to each other than to those in different cliques. This method simultaneously considers the weights of vertices and edges in the weighted EEG graph. Meanwhile, the proposed method focuses on the correlation between pairwise EEGs in the same class and scatter among different classes. Our contributions can be summarised as follows:

$\bullet$ We present a novel method mwcEEGs for EEG selection. It maps EEG selection to searching maximum weight cliques from an similarity-weighted EEG graph, simultaneously considering the edge weights and the vertex weights.

$\bullet$ We demonstrate the superiority of mwcEEGs, with several popular and newest classifiers, over the state-of-the-art time series selection approaches through a detailed experimentation using standard classification validity criteria on the real-world EEG datasets.

The structure of this paper is as follows: In Section 2, we provide some backgrounds into similarity measure and maximum weight clique applied in this paper. In Section 3, we describe the proposed method including the EEG selection algorithm mwcEEGs and its detail description. In Section 4, we outline the datasets, criteria, and baselines to compare. The results and discussion are also presented in this section. Finally, we conclude the work in Section 5.

\section{Preliminaries} \label{preliminaries}This section introduces the backgrounds of similarity measure: Fr\'{e}chet distance and maximum weight clique problem, which are the main two parts of the proposed method.

\subsection{Fr\'{e}chet Distance}
The Fr\'{e}chet distance (FD), Hausdorff distance (HD), and dynamic time warping (DTW) are the most widely used similarity measures. FD takes into account the location and ordering of the points along the curves, which makes it a better similarity measure for EEG than HD and DTW. Since HD regards the EEG as arbitrary point sets, it ignores the order of points along the EEG. It is possible for two EEGs to have small HD but large FD \cite{Ref1}. DTW measures the distance between curves by warping the sequences in time dimension which ignores timing orders of points and degrades the synchronism of two curves. It sometimes generates unintuitive alignments and results in inferior results \cite{Ref14}, since the DTW similarity measure is not essentially positive definite. Hence, the DTW does not reflect exact similarity of two EEGs because of its time warping \cite{Ref21}. Therefore, we applied Fr\'{e}chet distance to be the similarity measure in our work.

Mathematically, for two EEGs \begin{small}$f,g\subseteq \mathbb{R}^2$\end{small} with continuous mapping \begin{small}$f,g:[0,1]\to\mathbb{R}^2$\end{small}, the Fr\'{e}chet distance \begin{small}$\delta_F(f,g)$\end{small} between \begin{small}$f$\end{small} and \begin{small}$g$\end{small} is defined as
\begin{small}
\begin{equation}
\delta_F(f,g)=\inf_{\alpha,\beta}\sup_{t\in [0,1]}\parallel f(\alpha(t))-g(\beta(t))\parallel
\end{equation}
\end{small}
where \begin{small}$||\cdot||$\end{small} is the underlying norm, and \begin{small}$\alpha,\beta:[0,1]\to[0,1]$\end{small}.

\subsection{Maximum weight Clique}
Maximum weight clique problem (MWCP) is to search a complete subgraph (any two vertices are connected by an edge) with the maximum weights of vertices or edges from a weighted graph. Mathematically, given a weighted undirected graph \begin{small}$G=(V,E,\eta,\mu)$\end{small}, where \begin{small}$V$\end{small} and \begin{small}$E$\end{small} respectively denote vertex and edge of the graph; \begin{small}$\eta:V\to \{0\}\cup\mathbb{R^+}$\end{small} and \begin{small}$\mu:E\to\{0\}\cup\mathbb{R^+}$\end{small} are respectively the weights of them. \begin{small}$\sum_{v\in V}\eta_v+\sum_{e\in E}\mu_e$\end{small} is the weight of \begin{small}$G$\end{small}. Define \begin{small}$\mathbb{N}_n=\{1,\cdots,n\},n=|V|$ and $e_{ij}=\{i,j\}\in E$\end{small}, the aim of MWCP is to search a clique \begin{small}$C$\end{small} with maximum weight from \begin{small}$G$\end{small}, see (\ref{mcp}).
\begin{equation}\label{mcp}
\small
\sum_{i=1}^n\eta_iv_i+\sum\limits_{1\leq i<j\leq n}\mu_{ij}e_{ij}\to\max_{v_i,e_{ij}\in \{0,1\}}
\end{equation}
Especially, when \begin{small}$\{i,j\}\in E_C, e_{ij}=1$\end{small}, otherwise \begin{small}$e_{ij}=0$\end{small}, the MWCP is transformed to maximum clique problem (MCP) which aims to search a complete subgraph of maximum cardinality. In the case, the MWCP is to maximize \begin{small}$\sum_{e_{ij}\in \{0,1\},i,j\in V_C}\mu_ee_{ij}$\end{small}.

\section{The proposed method}
The proposed method selects EEGs through searching maximum weight cliques in an improved Fr\'{e}chet distance weighted EEG graph simultaneously considering the weights of vertices and edges.

\subsection{Weights of Edges}
In this work, EEGs are regarded as pairwise connected vertices in an undirected weighted complete graph. The Fr\'{e}chet distance-based similarities of EEGs are the weights on the edge, also called edge weight, that determines which edge is cut when partitioning complete subgraphs. However, the conventional Fr\'{e}chet distance (CFD) ignores the temporal structure and it is sensitive to global trends \cite{Ref6}. To improve the CFD, local tendency is brought to  evaluate the trend of EEGs. Mathematically, for two EEGs \begin{small}$tr_i=(a_1,a_2,\cdots,a_m)$\end{small} and \begin{small}$tr_j=(b_1,b_2,\cdots,b_n)$\end{small}, \begin{small}$tr_i,tr_j\in\bm{Trial}$\end{small}, \begin{small}$\bm{Trial}$\end{small} is the set of EEGs, the local trend of $tr_i$, $tr_j$ is evaluated by (\ref{localtrend}).
\begin{equation}\label{localtrend}
\small
\begin{split}
LocT(tr_i,tr_j)=\frac{\sum\limits_{t=1}^{p-q}(a_{t+q}-a_t)(b_{t+q}-b_t)} {\sqrt{\sum\limits_{t=1}^{p-q}(a_{t+q}-a_t)^2\sum\limits_{t=1}^{p-q}(b_{t+q}-b_t)^2}}
\end{split}
\end{equation}
where \begin{small}$1\leq q<p=\min\{m,n\}$\end{small}. \begin{small}$q$\end{small} denotes the length of segment EEG. As \begin{small}$a_{t+q}-a_t$\end{small} indicates, a larger \begin{small}$q$\end{small} probably ignores more local tendencies than those of shorter segments whose length \begin{small}$\leq q$\end{small}. Hence, commonly \begin{small}$q=1$\end{small} or down-sampling \begin{small}$\max\{m,n\}$\end{small} into \begin{small}$p=\min\{m,n\}$\end{small}; \begin{small}$LocT(tr_i,tr_j)\in[-1,1]$\end{small} estimates the local tendency observed simultaneously on EEGs. This index indicates the synchronization of two EEGs in temporal structure.

With local and global measurements, the improved Fr\'{e}chet distance (IFD) is calculated with (\ref{similarity}) and (\ref{frechet}).
\begin{equation}\label{similarity}
\small
\begin{split}
s_{ij}=&\lambda\cdot norm(\delta_F(tr_i,tr_j))\\
&+(1-\lambda)\frac{1-LocT(tr_i,tr_j)}{2}
\end{split}
\end{equation}
where
\begin{equation}\label{frechet}
\small
\begin{split}
&\delta_F(tr_i,tr_j)=\\
&\inf\limits_{\alpha,\beta:[0,1]\to [0,1]}\sup\limits_{t\in[0,1]}S\Big(tr_i\big(\alpha(t)\big),tr_j\big(\beta(t)\big)\Big)
\end{split}
\end{equation}
and \begin{small}$norm(\delta_F(tr_i,tr_j))\in[0,1]$\end{small} is the normalized value of \begin{small}$\delta_F(tr_i,tr_j)$\end{small}; \begin{small}$\lambda\in[0,1]$\end{small} is the weight for normalized global Fr\'{e}chet distance while \begin{small}$1-\lambda$\end{small} for local tendency.

All the similarities of $n$ pairwise connected EEGs calculated by IFD with local and global measurements construct the edge weight of the undirected weighted complete graph. Let \begin{small}$\bm {Trial}$\end{small} denote a \begin{small}$n\times n$\end{small} EEG matrix \begin{small}$\bm{Trial}^{n\times n}=(tr_1,tr_2,\cdots,tr_n)$\end{small} and the diagonal normalized similarity symmetric matrix \begin{small}$\bm{S}^{n\times n}$\end{small}, the edge weights \begin{small}$\bm\mu$\end{small}, is formed as:
\begin{equation}\label{edgeweight}
\small
\begin{split}
&\bm\mu=\bm S^{n\times n}=\bm{(s_1,s_2,\cdots,s_n)}^T
\end{split}
\end{equation}
where \begin{small}$s_{ji}=s_{ij}$\end{small}; \begin{small}$s_{ij}$\end{small}, \begin{small}$s_{ji}\in \bm{s}_i$\end{small}, and \begin{small}$i$\end{small}, \begin{small}$j$\end{small} of \begin{small}$s_{ij}$\end{small} denote \begin{small}$tr_i, tr_j\in\bm{Trial},|\bm{Trial}=n|$\end{small}.

\subsection{Weights of Vertices}
Vertex weight of \begin{small}$G=(V,E,\eta,\mu)$\end{small} indicates the importance of vertex to the potential maximum similarity-weighted clique, i.e., it measures the importance of the vertex to the potential clique and also determines which vertices can be partitioned together into a same clique. For a set of EEGs \begin{small}$\bm {Trial}$\end{small} in \begin{small}$G=(V,E,\eta,\mu)$\end{small}, their importance to the potential clique can be measured by the similarity partially ordered matrix \begin{small}$\bm{\eta}$\end{small}, also called the vertex weights, such that \begin{small}$<\bm\eta^{n\times 1},\preccurlyeq_\eta>=\{<\eta_i,\eta_j>\mid \eta_i\geq \eta_j;i\neq j;i,j\in \bm {Trial}\}$\end{small}, where \begin{small}$\eta_i$\end{small} is computed by (\ref{vertexweight}).
\begin{equation}\label{vertexweight}
\small
\eta_i=\frac{1}{\mid \bm{Trial}\mid-1}\sum\limits_{j\in \bm{Trial}\setminus i}s_{ij}
\end{equation}
Specifically, \begin{small}$|\bm{Trial}|=1$\end{small}, \begin{small}$\eta=0$\end{small}. \begin{small}$\eta_i$\end{small} also represents the similarity rank of objective EEG \begin{small}$tr_i$\end{small} to the rest of ones \begin{small}$tr_j\in \bm{Trial}\setminus tr_i$\end{small}. When \begin{small}$tr_i$\end{small} is partitioned into clique \begin{small}$C$\end{small}, \begin{small}$tr_j\in\{tr_1,\cdots,tr_k\}$\end{small} with high rank similarity based on vertex weights \begin{small}$\bm\eta$\end{small} to \begin{small}$tr_i$\end{small} are correspondingly highly likely grouped into \begin{small}$C$\end{small}. In detail, the vertex weight matrix \begin{small}$\bm\eta$\end{small} is formed as
\begin{equation}\label{partialweight}
\small
\begin{split}
&\bm\eta^{n\times 1}=(\eta_1,\dots,\eta_k, \dots, \eta_n)^T
\end{split}
\end{equation}
Simultaneously considering the edge weights \begin{small}$\bm\mu^{n\times n}$\end{small} and vertex weights \begin{small}$\bm\eta^{n\times 1}$\end{small}, the pairwise high-weight EEGs with same label \begin{small}$L_i\in\bm L^{n\times1}$\end{small} grouped into the same clique \begin{small}$C_i\in\bm{\mathcal{C}}$\end{small} with respect to similarity threshold \begin{small}$\delta$\end{small} can be represented by (\ref{clique}), with which the graph partition (vertices selection) can group most similar vertices with same labels into the same clique with a minimum weight loss of edge cut. In other word, based on (\ref{clique}), the proposed method repeats selecting the vertex with highest value of \begin{small}$\bm C_i=\bm L_i\bm\eta^T_i\bm\mu_i$\end{small} such that \begin{small}$\bm\mu_i\geq \delta_i$\end{small} in class \begin{small}$L_i$\end{small} as the one adding into the clique \begin{small}$C_i$\end{small} to form a new clique with larger weight, rather than randomly selecting one, which insures vertices with high importance can be grouped together into the same clique.
\begin{equation}\label{clique}
\small
\bm{\mathcal{C}=L\eta}^T\bm\mu_{\geq \bm\delta}=\cup_{i=1}^k \bm C_i=\cup_{i=1}^k\bm L_i\bm{\eta_i}^T\bm\mu_{i\geq\delta_i}
\end{equation}

\subsection{The mwcEEGs}
In this paper, EEG selection is mapped to multi-searching cliques with maximum weight based on \begin{small}$\bm {C=L\eta^T\mu_{\geq\delta}}$\end{small} and similarity threshold $\delta$, which is named mwcEEGs. In detail, the mwcEEGs selects most similar EEGs with same label into the same clique and separates discriminative ones into different cliques with respect to \begin{small}$\delta$\end{small} and in this selecting the invalid/noisy EEGs are removed. That is to say, mwcEEGs not only selects most intra-clique similar EEGs as well as inter-clique discriminative ones, but also reduces the influence of noisy EEGs on the classifier.

Given a labeled weighted EEG graph \begin{small}$G_L=(V_L,E,\eta,\mu)$\end{small} with label matrix \begin{small}$\bm L^{n\times 1}$\end{small}, where \begin{small}$\eta:V_L\to \{0\}\cup\mathbb{R^+}$\end{small} and \begin{small}$\mu:E\to \{0\}\cup\mathbb{R^+}$\end{small}, and positive integers \begin{small}$N_1,\cdots,N_m$\end{small} such that \begin{small}$\sum_{i=1}^mN_i\leq n, n\leq|V_L|$\end{small}, the mwcEEGs aims to select a family \begin{small}$\mathcal{C}=\{C_1,\cdots,C_m\}_{\geq\delta}$\end{small} of $m_{\geq 2}$ disjoint labeled cliques with maximum weight: \begin{small}$\sum_{C_{k\geq\delta_k}\in \mathcal{C}}w_{C_k}$\end{small} based on \begin{small}$\bm{\mathcal{C}=L\eta^T\mu_{\geq\delta}}$\end{small} with respect to the similarity threshold $\delta$. Given any edge \begin{small}$(i,j)\in E(C_k)$\end{small}, \begin{small}$k\in \mathbb{N}_m$\end{small}, and the weight of edge in \begin{small}$C_k$\end{small} such that \begin{small}$\mu_{ij}\in\{s_{ij}|s_{ij}\geq\delta_{C_k}\}$\end{small}, the weight function simultaneously considering the weights of edges and vertices is modified as (\ref{modifiedweight}) defines.
\begin{equation}\label{modifiedweight}
\small
w_{ij}=\begin{cases}
\frac{\eta_i+\eta_j}{N_k-1}+\mu_{ij}, &\text{if } \mu_{ij}\geq\delta_{C_k},i\neq j;\\
\qquad 0, & \text{otherwise.}
\end{cases}
\end{equation}
Where \begin{small}$N_k$\end{small} denotes the number of vertices of clique \begin{small}$C_k$\end{small}. \begin{small}$N_k$\end{small} increases along with a new vertex \begin{small}$v_t$\end{small} such that \begin{small}$\forall v_i\in C_k, \mu_{ti}\geq\delta_{C_k}$\end{small} joins into \begin{small}$C_k$\end{small} and then the modified weighted \begin{small}$w_{ij}$\end{small} is correspondingly updated. Moreover, the similarity thresholds \begin{small}$\delta$\end{small} are crucial to EEG selection. Simply, similar vertices whose edge weights larger than corresponding \begin{small}$\delta_k$\end{small} are likely grouped into the same clique \begin{small}$C_k$\end{small}. Namely, \begin{small}$\delta$\end{small} influences the EEG selection results. Furthermore, for searching \begin{small}$m$\end{small} cliques, it needs \begin{small}$m-1$\end{small} thresholds, since \begin{small}$m-1$\end{small} thresholds achieve \begin{small}$m-1$\end{small} cliques while remaining vertices are naturally regarded as a clique, then \begin{small}$m$\end{small} cliques are finally achieved.

\begin{proposition}\label{proposition1} The mwcEEGs with the modified weight function (\ref{modifiedweight}) can be written equivalently as\\
\begin{equation}
\small
\mathcal{F(C)}=\sum_{k=1}^mF(C_k)=\sum_{k=1}^m\sum_{\substack{e\in E(C_k),\delta_0=1\\ \delta_k\leq\mu_e<\delta_{k-1}}}w_e\to\max_{\mathcal{C}\in\mathscr{C}}\notag\\
\end{equation}
or
\begin{equation}
\mathcal{F(C)}=\frac12\sum_{k=1}^m\sum_{i\in C_k}\sum_{\substack{j\in C_k,\delta_0=1\\ \delta_k\leq\mu_{ij}<\delta_{k-1}}}w_{ij}\to \max_{\mathcal{C}\in\mathscr{C}}\notag
\end{equation}
where \begin{small}$\mathscr{C}=\Big\{\mathcal{C}=\{C_1,\cdots,C_m\}: C_i\cap C_j=\varnothing;|C_i|=N_i,|C_j|=N_j;\{i,j\}\in\mathbb{N}_m\Big\}$\end{small}, $k$ denotes the EEG class index.\end{proposition}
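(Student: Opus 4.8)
The plan is to verify the two displayed forms by substituting the modified weight function (\ref{modifiedweight}) and then exploiting the completeness of each clique together with the symmetry $s_{ij}=s_{ji}$. The substantive content is the claim that summing the \emph{edge} weights $w_e$ over a clique reproduces the standard maximum-weight-clique objective (\ref{mcp}), namely the vertex contribution $\sum_{i\in C_k}\eta_i$ plus the edge contribution $\sum_{e\in E(C_k)}\mu_e$. The per-edge term $\tfrac{\eta_i+\eta_j}{N_k-1}$ is engineered precisely so that it redistributes each vertex weight evenly across the edges incident to that vertex, and the displayed equalities merely record this bookkeeping.

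First I would fix a clique $C_k$ and restrict to the regime $\mu_{ij}\ge\delta_{C_k}$, where $w_{ij}=\tfrac{\eta_i+\eta_j}{N_k-1}+\mu_{ij}$; outside this regime $w_{ij}=0$ and contributes nothing, so the band $\delta_k\le\mu_e<\delta_{k-1}$ (with $\delta_0=1$) under the inner sum only records which edges are active. Because $C_k$ is a complete subgraph on $N_k$ vertices, every vertex $i\in C_k$ is incident to exactly $N_k-1$ edges, which yields the incidence (handshake) identity
\[ \sum_{\{i,j\}\in E(C_k)}(\eta_i+\eta_j)=(N_k-1)\sum_{i\in C_k}\eta_i. \]
Dividing by $N_k-1$ shows that the vertex part of $\sum_{e\in E(C_k)}w_e$ collapses to $\sum_{i\in C_k}\eta_i$, while the residual part is exactly $\sum_{e\in E(C_k)}\mu_e$. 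Summing over $k\in\mathbb{N}_m$ gives the first form $\mathcal{F(C)}=\sum_k\sum_{e\in E(C_k)}w_e$ and simultaneously identifies it with the combined vertex-and-edge weight maximized in (\ref{mcp}).

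To obtain the second form I would pass from the unordered edge sum to the ordered double sum over $i,j\in C_k$. Since $s_{ij}=s_{ji}$ forces $w_{ij}=w_{ji}$, and the $i\neq j$ branch of (\ref{modifiedweight}) gives $w_{ii}=0$, each edge $\{i,j\}$ is counted exactly twice in $\sum_{i\in C_k}\sum_{j\in C_k}w_{ij}$, so the factor $\tfrac12$ restores the edge sum. The decreasing thresholds $\delta_0=1>\delta_1>\cdots>\delta_{m-1}$, with the residual vertices forming $C_m$, partition the active edges into disjoint similarity bands $[\delta_k,\delta_{k-1})$; hence no edge is assigned to two cliques and the two-sided constraint is consistent with $\mathcal{C}\in\mathscr{C}$. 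The label information is absorbed into the admissible family $\mathscr{C}$ (same-label vertices per clique) and does not affect the algebraic identity.

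The main obstacle I anticipate is boundary bookkeeping rather than any deep step. One must confirm that the degenerate singleton case $N_k=1$ (where $N_k-1=0$) contributes no edges and, consistently with the stated convention $\eta=0$ for $|\bm{Trial}|=1$, is harmless; and one must check that the half-open bands $[\delta_k,\delta_{k-1})$ together with $\delta_0=1$ and the terminal clique exactly tile the similarity range $[0,1]$, so that the two-sided threshold in the proposition and the one-sided threshold $\mu_{ij}\ge\delta_{C_k}$ in (\ref{modifiedweight}) describe the same active edge set. Once this partition is pinned down, the incidence identity and the symmetry argument deliver both equalities directly.
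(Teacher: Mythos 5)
Your proposal is correct and follows essentially the same route as the paper: both arguments substitute the modified weight function, observe that the per-edge term $\tfrac{\eta_i+\eta_j}{N_k-1}$ summed over the $N_k-1$ edges incident to each vertex of the complete subgraph reconstitutes $\sum_{i\in C_k}\eta_i$ (your handshake identity is just a compact packaging of the paper's explicit expansion of $\sum_{j\in C_k\setminus i}\tfrac{\eta_i}{N_k-1}+\sum_{j\in C_k\setminus i}\tfrac{\eta_j}{N_k-1}$), and use the factor $\tfrac12$ with $w_{ij}=w_{ji}$, $w_{ii}=0$ to pass between the edge sum and the double sum. Your additional remarks on the $N_k=1$ degenerate case and the tiling of the similarity bands are sensible bookkeeping the paper leaves implicit, but they do not change the argument.
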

\begin{proof} Set \begin{small}$A_k=A(C_k)=\sum\nolimits_{i\in C_k}\eta_i$\end{small}, then
\begin{small}
\begin{align}
\begin{split}
&\mathcal{F(C)}=\sum_{k=1}^m\sum_{\substack{e\in E(C_k),\delta_0=1\\ \delta_k\leq\mu_e<\delta_{k-1}}}w_e=\frac12\sum_{k=1}^m\sum_{i\in C_k}\sum_{\substack{j\in C_k\backslash i,\delta_0=1\\ \delta_k\leq\mu_{ij}<\delta_{k-1}}}w_{ij} \notag\\
={} &\frac12\sum_{k=1}^m\sum_{i\in C_k}\sum_{\substack{j\in C_k\backslash i,\delta_0=1\\ \delta_k\leq\mu_{ij}<\delta_{k-1}}}\Big(\frac{\eta_i+\eta_j}{N_k-1}+\mu_{ij}\Big)\notag\\
={}&\frac12\sum_{k=1}^m\sum_{i\in C_k}\Big(\sum_{j\in C_k\backslash i}\frac{\eta_i}{N_k-1}+\\
&~~~~~~~~~~~~~~~~~~~\sum_{j\in C_k\backslash i}\frac{\eta_j}{N_k-1}+\sum_{\substack{j\in C_k\backslash i,\delta_0=1\\ \delta_k\leq\mu_{ij}<\delta_{k-1}}}\mu_{ij}\Big)\notag\\
={}&\frac12\sum_{k=1}^m\sum_{i\in C_k}\Big(\eta_i+\frac{A_k-\eta_i}{N_k-1}+\sum_{\substack{j\in C_k\backslash i,\delta_0=1\\ \delta_k\leq\mu_{ij}<\delta_{k-1}}}\mu_{ij}\Big)\notag\\
=&\sum_{k=1}^m\Big(\frac{(N_k-2)\eta_i+A_k}{2(N_k-1)}+\frac12\sum_{i\in C_k}\sum_{\substack{j\in C_k\backslash i,\delta_0=1\\ \delta_k\leq\mu_{ij}<\delta_{k-1}}}\mu_{ij}\Big)\notag\\
=&\sum_{k=1}^m\Big(\sum_{i\in C_k}\eta_i+\sum_{\substack{e\in E(C_k),\delta_0=1\\ \delta_k\leq\mu_e<\delta_{k-1}}}\mu_e\Big)\notag
\end{split}
\end{align}
\end{small}
\end{proof}

Proposition \ref{proposition1} indicates that searching the maximum similarity-weighted cliques based on the modified weight function is to maximize the total weight of vertices and edges satisfying the similarity thresholds. With the modified weight function simultaneously considering edge weights and vertex weights, the pseudo-code of mwcEEGs for EEG selection is shown in Algorithm \ref{mwcEEGs1}. The mwcEEGs firstly sets \begin{small}$m-1$\end{small} similarity thresholds, vertex weight matrix \begin{small}$\bm\eta$\end{small} with (\ref{similarity},\ref{frechet},\ref{edgeweight}) and edge weight matrix $\bm\mu$ with (\ref{vertexweight},\ref{partialweight}) for initializing the labeled EEG graph \begin{small}$G_L=(V_L,E,\bm\eta,\bm\mu)$\end{small}, seeing line 1. Then it calculates the value of \begin{small}$\bm C_k$\end{small} and selects the vertex with the maximum value into the $k^{th}$ clique without randomly selecting one, see lines 4-5. Subsequently, mwcEEGs compares the total weight of the new clique \begin{small}${C_k\cup\{v_t\}}$\end{small} with the old one \begin{small}$C_k$\end{small} to determine the new vertex \begin{small}$v_t$\end{small} joining to the clique or not, as lines 6-13 indicate. Additionally with lines 9-13, the matrix \begin{small}$\bm C_k$\end{small}, \begin{small}$\bm\eta_k$\end{small} and \begin{small}$\bm\mu_k$\end{small} are updated based on the vertex adding. When the \begin{small}$k^{th}$\end{small} clique \begin{small}$C_k$\end{small} whose vertex set is \begin{small}$V_{C_k}$\end{small} is searched out, the $\bm\eta_k$ and $\bm \mu_k$ are modified correspondingly in lines 9-13, to calculate the weight matrix of the remaining vertices \begin{small}$V-V_{C_k}$\end{small} and their edges with same label $l_k$, and then the vertex with $l_k$ holding the largest value in \begin{small}$\bm C_k$\end{small} achieved by \begin{small}$\bm\eta_k\bm\mu_k$\end{small} is most likely chosen as the next vertex into the clique, to form a new clique \begin{small}$C_k$\end{small} with larger total weight.

As Algorithm \ref{mwcEEGs1} and Proposition \ref{proposition1} demonstrate, the algorithm selects labeled EEGs \begin{small}$v_i$\end{small} such that \begin{small}$\forall v_j\in C_k,\mu_{ij}\geq\delta_k$\end{small} and then trains the classifier model with such selected labeled EEGs. In other words, this process with respect to \begin{small}$\delta_k$\end{small} not only chooses the most distinguished labeled EEGs with high similarity to train the classifier model, but also reduces the influence of invalid/noisy EEGs.

In the algorithm, a vertex \begin{small}$v_t$\end{small} labeled \begin{small}$l_k$\end{small} joining the clique \begin{small}$C_k$\end{small} if it simultaneously satisfies 2 conditions: (1) \begin{small}$\forall v_j\in C_k$\end{small}, \begin{small}$\mu_{tj}\geq\delta_k$\end{small}; (2) \begin{small}$\sum\limits_{v_i,v_j\in C_k\cup\{v_t\}}w_{ij}\geq\sum\limits_{v_i,v_j\in C_k}w_{ij}$\end{small}. Actually, once $\delta_k$ is set, adding $v_t$ into $C_k$ just needs satisfying (1).
\begin{theorem}\label{theorem} A vertex \begin{small}$v_t$\end{small} labeled \begin{small}$l_k$\end{small} joins the clique \begin{small}$C_k$\end{small} to construct a new larger-weight clique \begin{small}$C_k^\ast$\end{small} if and only if \emph{\begin{small}$\forall v_i\in C_k,\mu_{it}\geq\delta_k$\end{small}.}\end{theorem}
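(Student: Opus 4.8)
The plan is to reduce everything to the additive form of the clique weight supplied by Proposition~\ref{proposition1}, and then to read off both implications from a single weight-difference computation. The crucial preliminary observation is that although the edge weight $w_{ij}$ in (\ref{modifiedweight}) carries the clique size $N_k$ in its denominator — so that the individual $w_{ij}$ themselves change when a vertex is appended — Proposition~\ref{proposition1} guarantees that this $N_k$-dependence cancels after summation, leaving $F(C_k)=\sum_{i\in C_k}\eta_i+\sum_{e\in E(C_k)}\mu_e$. I would therefore work exclusively with this aggregate expression, so as never to manipulate the varying denominators directly.

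For the ``if'' direction I would assume $\mu_{it}\ge\delta_k$ for every $v_i\in C_k$ and set $C_k^\ast=C_k\cup\{v_t\}$. First I note that $C_k^\ast$ is genuinely a clique in the $\delta_k$-thresholded graph: every edge inside $C_k$ already cleared $\delta_k$ because $C_k$ is a clique, while each new edge $\{v_t,v_i\}$ clears $\delta_k$ by hypothesis. Applying Proposition~\ref{proposition1} separately to $C_k$ and to $C_k^\ast$, the only new contributions are the single vertex weight $\eta_t$ and the $N_k$ new edge weights, so that $F(C_k^\ast)-F(C_k)=\eta_t+\sum_{i\in C_k}\mu_{it}$. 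Since every pairwise similarity $s_{ij}$ lies in $[0,1]$ by (\ref{similarity}), the vertex weight $\eta_t$ from (\ref{vertexweight}) is non-negative, and each $\mu_{it}\ge\delta_k>0$; hence the increment is strictly positive and $C_k^\ast$ is a strictly larger-weight clique. This is precisely condition (2), so condition (1) alone suffices.

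For the ``only if'' direction I would argue directly from the definition of a complete subgraph. If $v_t$ is appended to form a clique $C_k^\ast$, then every pair of its vertices must be adjacent in the thresholded graph, i.e. must have edge weight passing $\delta_k$; restricting attention to the pairs $\{v_t,v_i\}$ forces $\mu_{it}\ge\delta_k$ for all $v_i\in C_k$. Equivalently, were some $\mu_{it}<\delta_k$, then by (\ref{modifiedweight}) the corresponding $w_{it}$ would vanish and $\{v_t,v_i\}$ would fail to be an edge, so $C_k^\ast$ could not be a complete subgraph at all.

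The step I expect to carry the real content is the weight-difference identity in the ``if'' direction. The tempting but messy route is to compare the per-edge weights of $C_k$ and $C_k^\ast$ term by term, which is awkward precisely because the denominator shifts from $N_k-1$ to $N_k$ as the clique grows. The clean route is to refuse that comparison and instead invoke Proposition~\ref{proposition1} once for each clique, after which the $N_k$-cancellation is automatic and the sign of the increment follows immediately from the non-negativity of $\eta_t$ and the positivity of the threshold. The ``only if'' direction is essentially definitional and should present no obstacle.
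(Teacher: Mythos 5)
Your proposal is correct and follows essentially the same route as the paper: both directions are handled identically, with the ``only if'' part read off from the definition of a clique in the $\delta_k$-thresholded graph and the ``if'' part obtained by invoking Proposition~\ref{proposition1} to put the clique weight in the additive form $\sum_{i\in C_k}\eta_i+\sum_{e\in E(C_k)}\mu_e$ and computing the increment $\eta_t+\sum_{v_i\in C_k}\mu_{it}>0$. Your explicit remark that the $N_k$-dependent denominators in (\ref{modifiedweight}) must be handled through the aggregate identity rather than edge by edge is a useful clarification of a point the paper leaves implicit, but it does not change the argument.
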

\begin{proof} Since \begin{small}$v_t\in C_k^\ast$\end{small}, obviously \begin{small}$\forall v_i\in C_k,\mu_{it}\geq\delta_k$\end{small}, and \begin{small}$l_k=l_{C_k}$\end{small}. For a vertex \begin{small}$v_t$\end{small} labeled \begin{small}$l_k$\end{small} such that \begin{small}$\forall v_j\in C_k,\mu_{tj}\geq\delta_k$\end{small}, according to Proposition \ref{proposition1}: \begin{small}$\sum_{e\in E(C_k)}w_e=\sum_{v_i\in C_k}\eta_i+\sum_{e\in E(C_k)}\mu_e$\end{small}, then
\begin{small}
\begin{align}
\begin{split}
&\sum_{e\in E(C_k\cup\{v_t\})}w_e-\sum_{e\in E(C_k)}w_e \notag\\
={}&\sum_{v_i\in C_k\cup\{v_t\}}\eta_i+\sum_{e\in E(C_k\cup\{v_t\})}\mu_e-\sum_{v_i\in C_k}\eta_i-\sum_{e\in E(C_k)}\mu_e \notag\\
={} & \sum_{v_i\in C_k}\eta_i+\eta_t+\sum_{e\in E(C_k)}\mu_e+\sum_{v_i\in C_k}\mu_{it}\\
&-\sum_{v_i\in C_k}\eta_i-\sum_{e\in E(C_k)}\mu_e \notag= \eta_t+\sum_{v_i\in C_k}\mu_{it} \quad>0
\end{split}
\end{align}
\end{small}

Namely, \begin{small}$v_t$\end{small} joining \begin{small}$C_k$\end{small} increases the weight, therefore \begin{small}$v_t$\end{small} and \begin{small}$\forall v_i\in C_k$\end{small} construct a new clique \begin{small}$C_k^\ast=C_k\cup \{v_t\}$\end{small} with a larger weight than that of \begin{small}$C_k$\end{small}.\end{proof}
Based on Theorem \ref{theorem}, once \begin{small}$\delta_{k=1,\cdots,m}$\end{small} is set, searching \begin{small}$m$\end{small} labeled cliques such that \begin{small}$\max_{\mathcal{C}\in\mathscr{C}}$\end{small} can be transformed to \begin{small}$m$\end{small}-search labeled maximum similarity-weighted cliques, namely \begin{small}$m$\end{small}-repeating mwcEEGs with \begin{small}$\delta_k$\end{small}, \begin{small}$k=1,\cdots,m$\end{small}.
\begin{proposition}\label{proposition2} The mwcEEGs for \begin{small}$m$\end{small} labeled cliques selection can be equivalently transformed to \begin{small}$m$\end{small}-time repeating mwcEEGs with \begin{small}$0\leq\delta_k<\delta_{k-1}, \delta_0=1$\end{small}. \end{proposition}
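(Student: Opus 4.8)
The plan is to turn the additive structure already exposed by Proposition~\ref{proposition1} into a separability argument: the joint maximization of $\mathcal{F}(\mathcal{C})$ over the family $\mathcal{C}=\{C_1,\dots,C_m\}$ factorizes into $m$ independent single-clique maximizations, each of which is realized by one run of the greedy mwcEEGs with its own threshold $\delta_k$. I would establish the equivalence in both directions: that $m$ repeated runs produce a feasible family whose total weight equals the sum of the individual clique weights, and that the global optimum over $\mathscr{C}$ is attained by collecting the $m$ per-label optima.

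First I would start from $\mathcal{F}(\mathcal{C})=\sum_{k=1}^m F(C_k)$ as proved in Proposition~\ref{proposition1}, and stress that each term $F(C_k)$ depends only on the vertices and edges of $C_k$ together with its own window $[\delta_k,\delta_{k-1})$. The key structural fact to invoke is that every clique $C_k$ collects only vertices bearing the single label $l_k$, so the label classes partition $V_L$ and the disjointness requirement $C_i\cap C_j=\varnothing$ in $\mathscr{C}$ holds automatically: distinct cliques never draw on the same vertex, and their internal edges are likewise disjoint. Consequently the feasible set is a product of $m$ per-label feasible sets with no shared decision variables, which lets me pass the maximum through the sum, i.e. $\max_{\mathcal{C}\in\mathscr{C}}\sum_{k=1}^m F(C_k)=\sum_{k=1}^m \max_{C_k} F(C_k)$.

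Second I would identify each inner problem $\max_{C_k}F(C_k)$, subject to the label-$l_k$ restriction and the lower bound $\mu_e\geq\delta_k$, as exactly one invocation of single-clique mwcEEGs with threshold $\delta_k$. Theorem~\ref{theorem} supplies the optimality certificate: any $v_t$ labeled $l_k$ with $\mu_{it}\geq\delta_k$ for all $v_i\in C_k$ strictly increases the clique weight when admitted, so the greedy rule that admits precisely these vertices can never leave weight on the table and terminates at the weight-maximal label-$l_k$ clique. I would then check that the prescribed ordering $0\leq\delta_k<\delta_{k-1}$ with $\delta_0=1$ makes the $m$ runs mutually consistent: the half-open windows $[\delta_k,\delta_{k-1})$ tile the normalized similarity range, so each retained edge lands in exactly one window and the per-clique sums in Proposition~\ref{proposition1} never double-count an edge. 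Assembling the $m$ weight-maximal cliques therefore reconstructs a global maximizer of $\mathcal{F}$, which is what running mwcEEGs $m$ times (the last clique being the residual vertices) delivers.

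The hard part will be the separability step rather than the greedy optimality, which Theorem~\ref{theorem} already hands me. The decomposition of the \emph{objective} is immediate from Proposition~\ref{proposition1}, but I must argue rigorously that the \emph{constraints} decouple so that independent per-clique optimization cannot violate a hidden coupling. The crucial enabling observation is the same-label restriction, which eliminates inter-clique contention for vertices; I would state this explicitly and then confirm that the decreasing-threshold windows do not reintroduce a coupling through the upper bound $\mu_e<\delta_{k-1}$. The latter requires verifying that the greedy single-clique procedure, which tests only $\mu_{it}\geq\delta_k$, nonetheless respects $\mu_e<\delta_{k-1}$ for the clique it forms; this follows because edges inside $C_k$ join two label-$l_k$ vertices and thus cannot coincide with the edges already consumed at higher thresholds, so the window membership is automatic and no edge migrates between cliques.
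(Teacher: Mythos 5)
Your proposal follows essentially the same route as the paper's proof: both start from the additive decomposition $\mathcal{F}(\mathcal{C})=\sum_{k=1}^m F(C_k)$ supplied by Proposition~\ref{proposition1}, argue that the maximization distributes over the sum because the per-clique subproblems do not interact (the paper justifies this via the disjoint half-open threshold windows $\delta_k\leq\mu_e<\delta_{k-1}$, which you also invoke alongside the label-disjointness of the cliques), and then identify each separated term with one run of mwcEEGs at threshold $\delta_k$. Your write-up is somewhat more explicit about why the constraints decouple and about the role of Theorem~\ref{theorem}, but the underlying argument is the one the paper gives.
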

\begin{proof} Recall Proposition \ref{proposition1}. The mwcEEGs, can be written as \begin{small}$\mathcal{F(C)}=\sum\limits_{k=1}^m\sum\limits_{\substack{e\in E(C_k),\delta_0=1\\\delta_k\leq\mu_e<\delta_{k-1}}}w_e\to \max\limits_{\mathcal{C}\in\mathscr{C}}$\end{small}, then
\begin{small}
\begin{align}
\begin{split}
&\mathcal{F(C)}=\sum_{k=1}^m\sum_{\substack{e\in E(C_k),\delta_0=1\\\delta_k\leq\mu_e<\delta_{k-1}}}w_e\to\max\limits_{\mathcal{C}\in\mathscr{C}}\notag \\
={} &\sum_{k=1}^m\Big(\sum_{i\in C_k}\eta_i+\sum_{\substack{e\in E(C_k),\delta_0=1\\\delta_k\leq\mu_e<\delta_{k-1}}}\mu_e\Big)\to \max\limits_{\mathcal{C}\in\mathscr{C}}\notag\\
={}&\Big((\sum_{i\in C_1}\eta_i+\sum_{e\in E(C_1),\delta_1\leq\mu_e<1}\mu_e)+\cdots\notag\\
&\quad+(\sum_{i\in\mathcal{C}\backslash C_1\backslash\cdots\backslash C_{m-1}}\eta_i+\sum_{\substack{e\in E(\mathcal{C}\backslash C_1\backslash \cdots\backslash C_{m-1})\\\delta_m\leq \mu_e<\delta_{m-1}}}\mu_e)\Big)\to\max\limits_{\mathcal{C}\in\mathscr{C}}\notag\\
&\because e\in E(C_k), 0\leq\delta_k\leq e <\delta_{k-1},\delta_0=1; k=1,\cdots,m\notag\\
={}&\Big((\sum_{i\in C_1}\eta_i+\sum_{e\in E(C_1),\delta_1\leq\mu_e<1}\mu_e)\to\max\limits_{C_1\in\mathcal{C}}\Big)+\cdots\notag\\
&+\Big((\sum_{i\in\mathcal{C}\backslash C_1\backslash\cdots\backslash C_{m-1}}\eta_i+\sum_{\substack{e\in E(\mathcal{C}\backslash C_1\backslash \cdots\backslash C_{m-1})\\\delta_m\leq \mu_e<\delta_{m-1}}}\mu_e)\to\max\limits_{C_m\in\mathcal{C}}\Big)\notag\\
={}&\sum_{k=1}^m\Big\{(\sum_{i\in C_k}\eta_i+\sum_{\substack{e\in E(C_k),\delta_0=1\\\delta_k\leq\mu_e<\delta_{k-1}}}\mu_e)\to \max_{C_m\in\mathcal{C}}\Big\}
\end{split}
\end{align}
\end{small}
\end{proof}

\begin{algorithm}[!t]
\begin{small}
\renewcommand{\algorithmicrequire}{ \textbf{Input:}}
\renewcommand{\algorithmicensure}{ \textbf{Output:}}
\caption{\quad Graph-based EEG Selection.}
\begin{algorithmic}[1]\label{mwcEEGs1}
\REQUIRE~~\\
 $\bm\delta$: Set of $m_{\geq1}$ similarity thresholds $\bm\delta=\{\delta_k|0\leq\delta_k<\delta_{k-1};\delta_0=1,k=1,\cdots,m\}$;\\
 $\bm L^{n\times 1}$: label matrix of $n$ EEGs;\\
 $\bm\eta^{n\times 1}$, vertex weight matrix of $n$ EEGs;\\
 $\bm\mu^{n\times n}$, edge weight matrix of $n$ EEGs;\\
\ENSURE~~\\
$\mathcal{C}$: set of $m$ labeled cliques such that $\max\limits_{\mathcal{C}\in\mathscr{C}}\mathcal{F(C)}$;
\STATE Initialize labeled EEG graph $G_L=(V_L,E,\bm\eta,\bm\mu)$ and $\mathcal{C}=\varnothing$;
\REPEAT
\STATE $\delta_k\in\bm\delta$, $\bm L_k=\bm L$;
\STATE $\bm C_k=\bm L_k\bm\eta^T_k\bm\mu_k$;
\STATE $\{v_t$ labeled $l_k$ with maximum value in $\bm C_k\}\in V_L$;
\IF {$\forall \{v_n\}\in V_{C_k}, \delta_k\leq\mu_{v_tv_n}<\delta_{k-1}$}
\STATE $W_{C_k\cup\{v_t\}}=\sum\nolimits_{i,j\in{V_{C_k}\cup\{v_t\}}}w_{ij}$;
\ENDIF
\IF {$W_{C_k\cup\{v_t\}}\geq W_{C_k}$}
\STATE $V_{C_k}=V_{C_k}\cup\{v_t\}$;
\STATE $V_L=V_L\setminus \{v_t\}$;
\STATE $W_{C_k}=W_{C_k\cup\{v_t\}}$;
\ENDIF
\STATE $\mathcal{C}=\mathcal{C}\cup C_k$;
\STATE $\bm L=\bm L\setminus\bm L_k$, $\bm\delta=\bm\delta\setminus\{\delta_k\}$;
\STATE $k=k+1$;
\UNTIL{$\bm\delta=\varnothing$ or $\bm L=\varnothing$};
\end{algorithmic}
\end{small}
\end{algorithm}

According to Proposition \ref{proposition2} and Theorem \ref{theorem}, the mwcEEGs is transformed by \begin{small}$m$\end{small}-searching labeled maximum cliques, as Algorithm \ref{mwcEEGs2} shows, in which any algorithm for maximum clique problem (MCP) \cite{Ref28} can be applied to search the cliques with maximum weight in the given graph. Importantly, in every iteration to search the maximum clique, the vertex weights \begin{small}$(\eta_v\in\bm\eta_{V_k})$\end{small} of \begin{small}$v\in V_k$\end{small} will be ranked in a descending order and the weight matrix \begin{small}$(\bm C_k=\bm L_k\bm\eta_k^T\bm\mu_k)$\end{small} is calculated in line 8, so that the algorithm can choose vertex with highest weight into the potential clique \begin{small}$C_k$\end{small}. This procedure contributes high-quality selection and fast searching maximum clique. A higher \begin{small}$\eta_i \in \bm \eta$\end{small} indicates the vertex \begin{small}$v_i$\end{small} has higher similarity with all the other vertices. That is, vertices with higher \begin{small}$\eta_i$\end{small} are likely grouped into the same clique. Meanwhile, selecting the vertex with largest value in \begin{small}$\bm C_k$\end{small} as the new vertex adding to the potential clique also reduces the time consumption compared with the conventional methods that randomly select vertices.

\section{Experiments} \label{experiments}

\subsection{Datasets}

The EEG data we experiment with are slow cortical potentials (SCPs \footnote{The data set is publicly available as online archives at http://www.bbci.de/competition/ii/.}) provided by Institute of Medical Psychology and Behavioral Neurobiology from University of T\"ubingen. In detail, Dataset Ia with 135 EEG trials labeled '0' (Traindata\_0, Ia) and 133 labeled '1' (Traindata\_1, Ia) are taken from a healthy subject (HS). Dateset Ib with 100 EEG trials labeled '0' (Traindata\_0, Ib) and 100 labeled '1' (Traindata\_1, Ib) are taken from an amyotrophic lateral sclerosis subject (ALS). 3 cases of experiments are set up in Table \ref{tab:Datasets}. In this paper, we apply Hold-out strategy \cite{Ref15} to evaluate methods. The data sets are divided into two parts: training data and testing data with the proportion of 2:1. Furthermore, the Hold-out strategy is applied 3 times to produce 3 groups of training and testing data for the methods.

\setlength\abovecaptionskip{0.2cm}
\setlength\belowcaptionskip{0cm}
\begin{table}[!h]
\newcommand{\tabincell}[2]{\begin{tabular}{@{}#1@{}}#2\end{tabular}}
\centering
\caption{Datasets}
\label{tab:Datasets}
\scalebox{0.69}{
\begin{tabular}{|c|c|c|c|c|}
\hline
EEG Cases & Datasets &  Training:Testing & \# of Classes\\
\hline
\hline
EEG Case 1 & Traindata\_0, Ia + Traindata\_1, Ia & 180 : 88 (268) & 2\\
\hline
EEG Case 2 & Traindata\_0, Ib + Traindata\_1, Ib & 134 : 66 (200) & 2\\
\hline
EEG Case 3 &  \tabincell{c}{Traindata\_0 + Traindata\_1, Ia \\Traindata\_0 + Traindata\_1, Ib} & \tabincell{c} {313 : 156} (468) & 4\\
\hline
\end{tabular}}
\vspace{-0.2in}
\end{table}

\begin{algorithm}[!t]
\begin{small}
\renewcommand{\algorithmicrequire}{ \textbf{Input:}}
\renewcommand{\algorithmicensure}{ \textbf{Output:}}
\caption{\quad mwcEEGs via Maximum Weight Cliques}
\begin{algorithmic}[1]\label{mwcEEGs2}
\REQUIRE~~\\
 $\bm\delta$: Set of $m_{\geq1}$ similarity thresholds $\bm\delta=\{\delta_k|0\leq\delta_k<\delta_{k-1};\delta_0=1,k=1,\cdots,m\}$;\\
  $\bm L^{n\times 1}$: label matrix of $n$ EEG;\\
 $\bm\eta^{n\times 1}$, vertex weight matrix of $n$ EEG;\\
 $\bm\mu^{n\times n}$, edge weight matrix of $n$ EEG;\\
\ENSURE~~\\
 $\mathcal{C}$: set of $m$ labeled cliques such that $\max\limits_{\mathcal{C}\in\mathscr{C}}\mathcal{F(C)}$;
\STATE Initialize labeled EEG graph $_LG=(V_L,E,\eta,\mu)$ and $\mathcal{C}=\varnothing$;
\REPEAT
\STATE $G_k=G$: $V_k=V_L$, $E_k=E$;
\STATE $\delta_k\in\bm\delta$, $\bm L_k=\bm L$;
\IF {$\mu_{ij}<\delta_k$}
\STATE $E_k=E_k\setminus\{e_{ij}\}$;
\ENDIF
\STATE Search $C_k=(V_C,E_C)$ such that $\max |V_C|$ from $G_k=(V_k,E_k,\eta,\mu)$ with any MCP algorithm, improved with descending sorting $(\eta_v\in\bm\eta_{V_k})$ of $v\in V_k$ and $C_k=\bm L_k\bm\eta^T_k\bm\mu_k$;
\STATE $\mathcal{C}=\mathcal{C}\cup C_k$;
\STATE $G_M=(V_M,E_M,\eta,\mu)$:\\
 \qquad$V_M=V_L-V_C=\{i\mid i\in V_L;i\notin V_C\}$,\\
 \qquad $E_M=\{e_{ij}\mid i,j\in V_L;i,j\notin V_C;i\neq j\}$;
\STATE $G=G_M$;
\STATE $\bm L=\bm L\setminus\bm L_k$, $\bm\delta=\bm\delta\setminus\{\delta_k\}$;
\STATE $k=k+1$;
\UNTIL{$\delta=\varnothing$ or $\bm L=\varnothing$};
\end{algorithmic}
\end{small}
\end{algorithm}

\subsection{Evaluation Methodology}
The following criteria, which have been well used in data mining area \cite{Ref5,Ref8,Ref20}, are selected to evaluate the proposed method.

$\bullet$ \textbf{\emph{Rand index}} ({\em RI}) \cite{Ref22} estimates the quality of classification with respect to the right classes of the data. It measures the percentage of right decisions made by the method. In detail, \begin{small}$RI=\frac{TP+TN}{TP+TN+FP+FN}$\end{small}, where \emph{TP}, \emph{FP}, \emph{TN}, and \emph{FN} respectively denotes the number of true positives, false positives, true negatives, and false negatives.

$\bullet$ \textbf{\emph{F-score}} \cite{Ref25} weighs FP and FN in RI unequally through weighting a parameter \begin{small}$\beta\geq0$\end{small} on \emph{recall}, commonly \begin{small}$\beta=1$\end{small}. Mathematically, \emph{F-score}\begin{small}$=\frac{(1+\beta^2)pr}{\beta^2p+r}$\end{small}, where \emph{precision}: \begin{small}$p=\frac{TP}{TP+FP}$\end{small} and \emph{recall}: \begin{small}$r=\frac{TP}{TP+FN}$\end{small}.

$\bullet$ \textbf{\emph{Fleiss' kappa ($\kappa$)}} \cite{Ref7} is a statistical measure for assessing the coherence of decision ratings among classes. Mathematically, \begin{small}$\kappa=\frac{\overline{P}-\overline{P}_e}{1-\overline{P}_e}$\end{small}, where \begin{small}$\overline{P}-\overline{P_e}$\end{small} denotes the degree of agreement actually achieved over chance and \begin{small}$1-\overline{P}_e$\end{small} denotes the degree of agreement attainable above chance. Meanwhile, \begin{small}$\overline{P}=\frac{1}{Nn(n-1)}(\sum_{i=1}^N\sum_{j=1}^kn_{ij}^2-Nn)$\end{small}, \begin{small}$\overline{P}_e=\sum_{j=1}^k(\frac{1}{Nn}\sum_{i=1}^Nn_{ij})^2$\end{small}, and \begin{small}$N$\end{small} denotes the number of subjects, \begin{small}$n$\end{small} the number of ratings per subject, \begin{small}$k$\end{small} the number of classes into which assignment are made.

\subsection{Baselines}
We compared mwcEEGs with the state-of-the-art EEG time series selection methods, as follows.

$\bullet$ \textbf{lwEEGs}: Local weighted EEG time series selection computes time series centroid in each class and selects \begin{small}$k$\end{small} nearest \cite{Ref12,Ref13} time series from the same labeles to the corresponding centroid as the training time series.

$\bullet$ \textbf{gwEEGs}: Global weighted EEG time series selection computes the centroid of all labeled time series and selects \begin{small}$k$\end{small} closest ones from all classes to the centroid time series as the training ones for classifiers.

$\bullet$ \textbf{lrtEEGs} \cite{Ref29}: Local recursion testing EEG time series selection recursively selects \begin{small}$m$\end{small} nearest time series to every testing one with same label and chooses the \begin{small}$k\leq m$\end{small} most nearest ones from \begin{small}$m$\end{small} selected time series in each class as the training data for classifiers, which focuses on the local correlation between time series to the testing ones.

$\bullet$ \textbf{grtEEGs} \cite{Ref29}: Global recursion testing EEG time series selection recursively selects \begin{small}$m$\end{small} nearest time series to each testing time series without considering the class labels and then chooses the \begin{small}$k\leq m$\end{small} most similar ones as the training time series for classifiers. It considers the global similarity between all the time series to the entire testing time series.

Meanwhile, in order to evaluate all the methods to select EEGs, we apply most popular and newest classifiers to classify EEGs with EEG selection methods. The applied classifiers are introduced below, which mainly include most widely applied SVM, shapelet-based, ensemble-based, and structure-based classifiers.

$\bullet$ \textbf{SVM}: We apply LIBSVM \cite{Ref4} in this section as one of the baselines to classify EEG data. With LIBSVM, the kernel width and \begin{small}$C$\end{small} of SVM are respectively tuned as \begin{small}$\gamma\in\{10^{-3},10^{-2},\dots,10^1\}$\end{small} and \begin{small}$C\in\{10^{-2},10^{-1},\dots,10^1,10^2\}$\end{small}.

$\bullet$ \textbf{st-TSC} \cite{Ref17}: Shapelet transform-based method time series classifier extracts \begin{small}$k$\end{small} discriminative subsequences that best distinguish time series in different classes and uses an optimization formulation to search for fixed length time series subsequences that best predict the target variable by calculating the distances from a series to each shapelet.

$\bullet$ \textbf{RPCD} \cite{Ref24}: Recurrence patterns compression distance time series classifier uses recurrence plots as representation domain for time series classification via applying Campana-Keogh distance to estimate similarity.

$\bullet$ \textbf{COTE} \cite{Ref2}: An ensemble-based classifier classifies time series by applying a heterogeneous ensemble onto transformed representations. A flat collective of transform based ensembles (COTE) fuses various classifiers into a single one, which includes whole time series classifiers, shapelet classifiers, and spectral classifiers.

$\bullet$ \textbf{SAX-SEQL} \cite{Ref19}: An efficient linear classifier learns long discrete discriminative subsequences from time series by exploiting all-subsequences space based on symbolic aggregate approximation (SAX) \cite{Ref16} which smooths and compresses time series into discrete representations.

\begin{figure}[!t]
\centering
\includegraphics[height=4.4in]{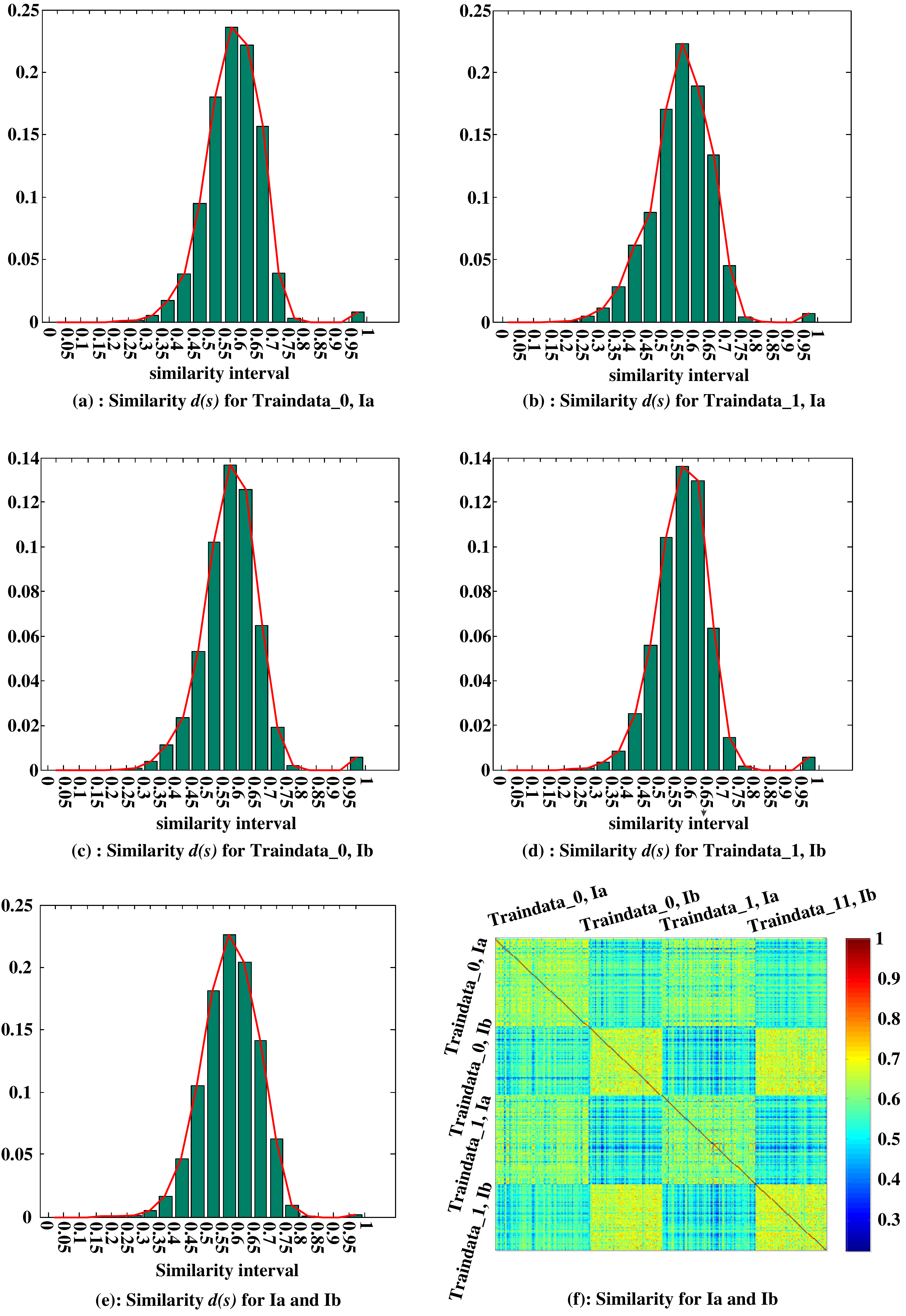}
\caption{Similarity distribution $d(s)$ of 3 EEG datasets.}
\label{fig:density}
\vspace{-0.4cm}
\end{figure}

\begin{figure*}[!t]
\centering
\includegraphics[height=3.6in]{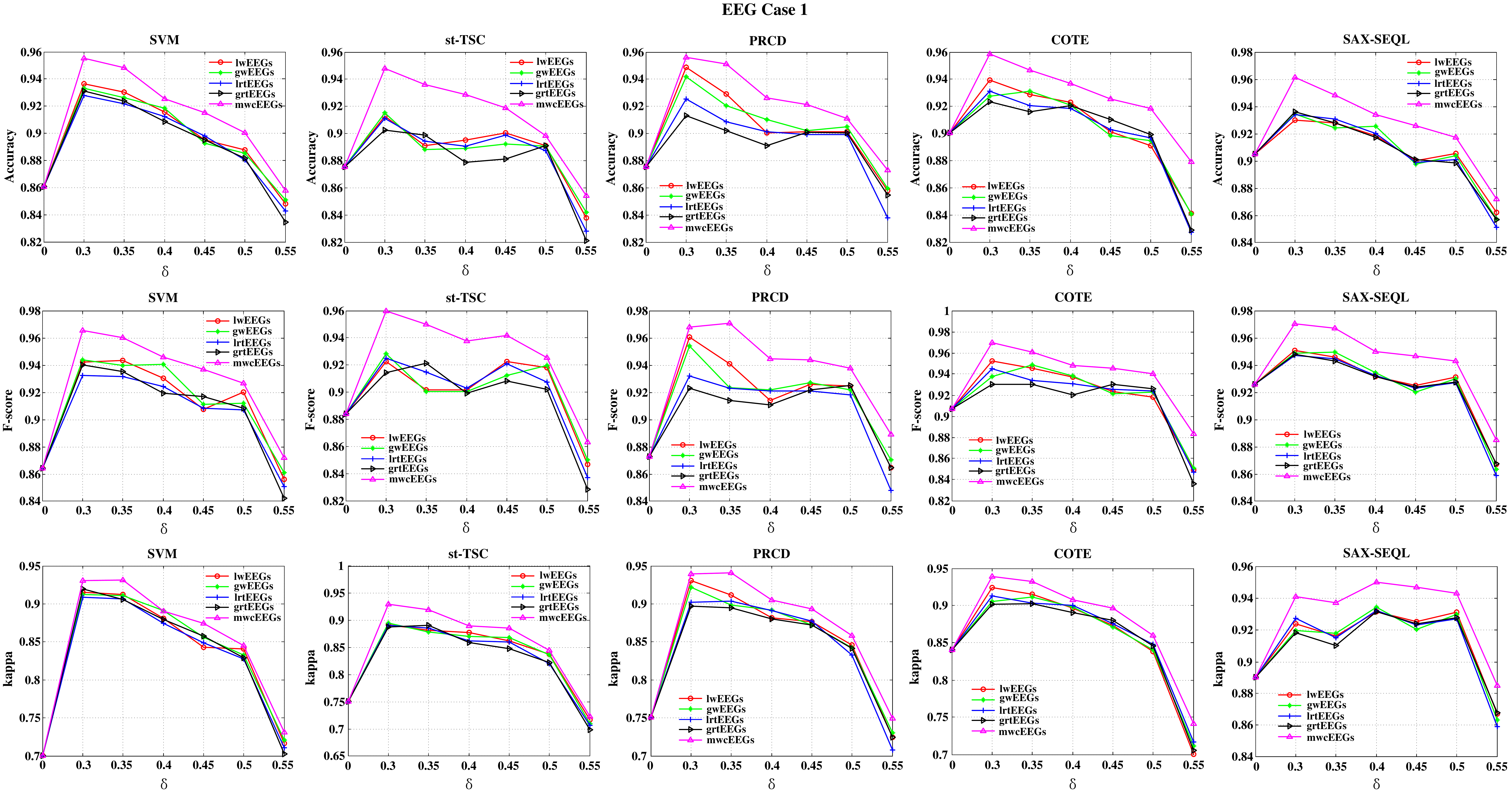}
\caption{Classification results of 5 classifiers with selected EEGs by different EEG selection methods on EEG Case 1.}
\label{fig:case1}\vspace{-0.2cm}
\end{figure*}

\begin{figure*}[!t]
\centering
\includegraphics[height=3.6in]{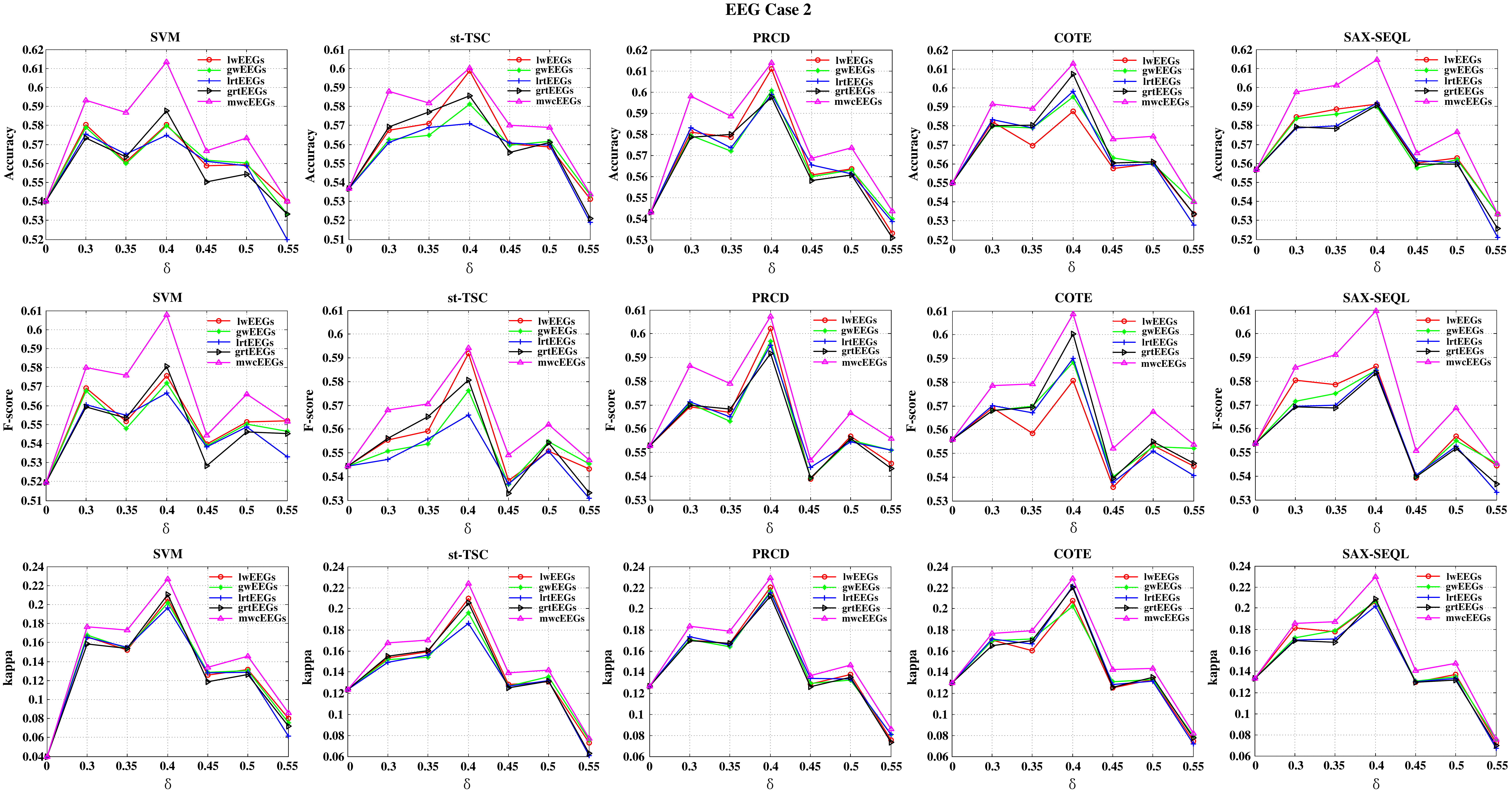}
\caption{Classification results of 5 classifiers with selected EEGs by different EEG selection methods on EEG Case 2.}
\label{fig:case2}\vspace{-0.2cm}
\end{figure*}

\begin{figure*}[!t]
\centering
\includegraphics[height=3.6in]{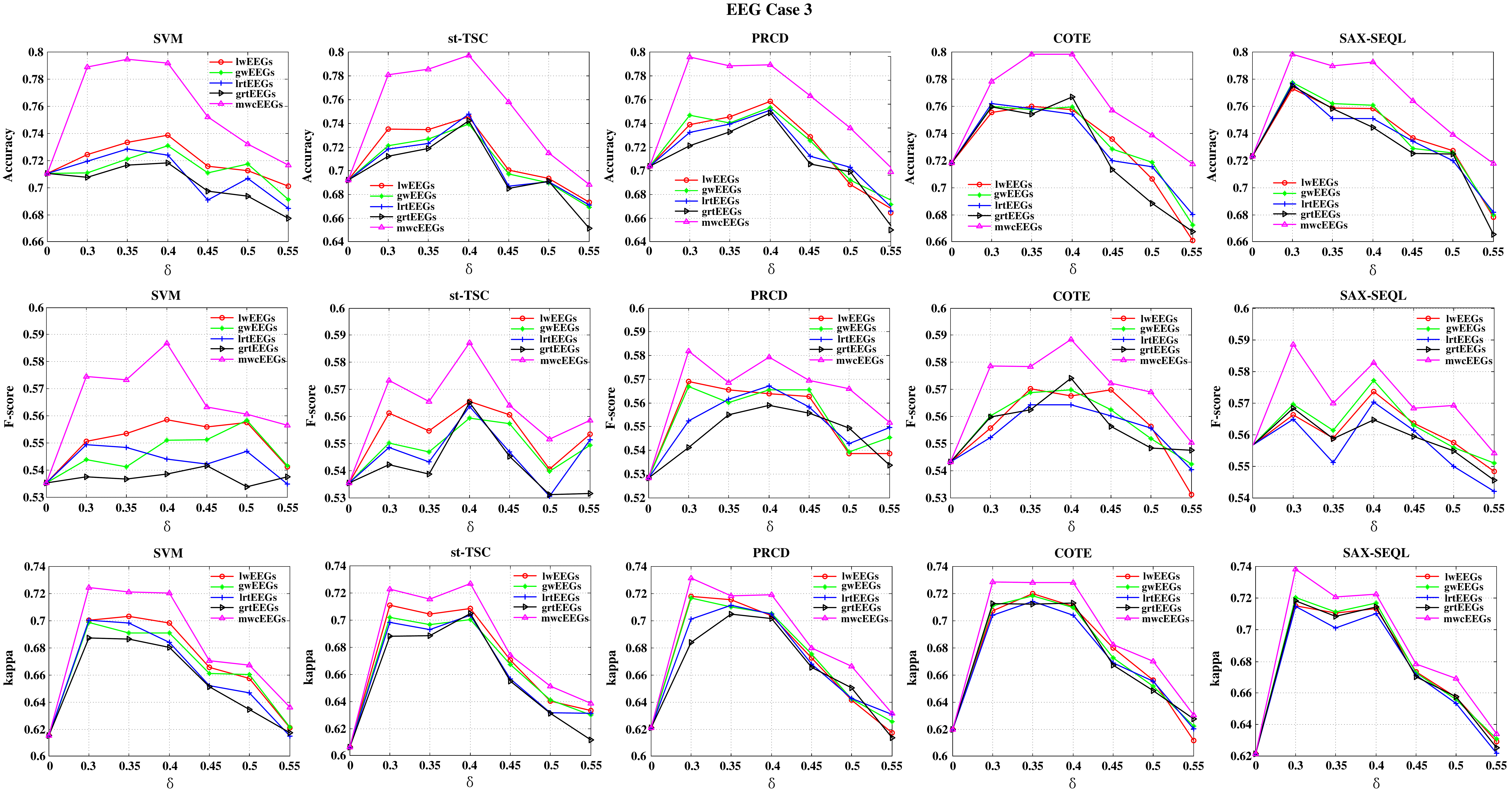}
\caption{Classification results of 5 classifiers with selected EEGs by different EEG selection methods on EEG Case 3.}
\label{fig:case3}\vspace{-0.2cm}
\end{figure*}

\subsection{Parameter setting}
For mwcEEGs, \begin{small}$\delta$\end{small} determines the selection of EEGs and affects the classification performance. The mwcEEGs with a smaller \begin{small}$\delta$\end{small} selects less discriminative but more general EEGs while mwcEEGs with a larger \begin{small}$\delta$\end{small} selects more discriminative but less general EEGs. Then the classification with classifiers are influenced by the selected EEGs, namely, the final classification results are affected by \begin{small}$\delta$\end{small}. And more distinguished EEGs with a large \begin{small}$\delta$\end{small} degrades the classification performance of classifiers because of low generality of selected discriminative EEGs which cannot represent the general EEG data. As a consequence, the mwcEEGs with a moderate or optimal \begin{small}$\delta$\end{small} selects a better amount of EEGs which balances the discrimination and generality, and seems to achieve better classification performance. Here we select the optimal \begin{small}$\delta$\end{small} based on the discrete probability distribution of similarities among EEG. Mathematically, the discrete probability distribution \begin{small}$d(\delta)$\end{small} is defined as
\begin{small}
\begin{equation}\label{distribution}
d(\delta)=\frac{n_{s\in[\delta_i,\delta_i+Interval]}}{N^2}
\end{equation}
\end{small}
where \begin{small}$N$\end{small} denotes the number of vertices (EEGs) in the similarity-weighted graph \begin{small}$G$\end{small}, \begin{small}$n$\end{small} denotes the number of similarities \begin{small}$s\in \bm S$\end{small} locate in \begin{small}$[\delta_i,\delta_i+Interval]$\end{small}.
\par With (\ref{distribution}), $\delta$ is chosen based on the (\ref{delta})
\begin{small}
\begin{equation}\label{delta}
\delta=f^{-1}(D)\in\Big\{\delta|f(\delta):\sum_{i\in{[0,\frac{1}{Interval}-1]}}^\frac{1}{Interval}d_i(\delta)=A\Big\}
\end{equation}
\end{small}
where \begin{small}$A\in (0,1)$\end{small} and a larger \begin{small}$A$\end{small} achieves a smaller \begin{small}$\delta$\end{small}.

Similarity threshold \begin{small}$\delta_{k=1,\cdots,m,m\geq 1}$\end{small} in the work for EEG selection with 3 datasets is respectively set based on the discrete probability distribution shown in (a) -- (e) of Figure \ref{fig:density}. (a) -- (e) respectively shows that most EEGs from the same dataset are similar with each other and the EEG similarity is displayed in (f).

To illustrate the influence of \begin{small}$\delta$\end{small} on mwcEEGs, we set \begin{small}$\delta\in\{0.3,0.35,0.4,0.45,0.5,0.55\}$\end{small} for 3 EEG Cases. The number of selected EEGs for lwEEGs, gwEEGs, lrtEEGs, and grtEEGs is set as same as mwcEEGs. For classifiers SVM, st-TSC, RPCD, COTE, and SAX-SEQL, we set the optimal parameters as same as the references set. As introduced before, data set in 3 cases is divided into 3 groups of training and testing data with the proportion of 2:1 based on the Hold-out strategy. All the methods are run on 3 groups of EEG data for each case and the results are averaged.

\subsection{Experimental Results and Discussion}
In this paper, we proposed the maximum weight clique inspired method mwcEEGs to select EEGs. To firmly establish the efficacy of our method, we compared the mwcEEGs with the state-of-the-art time series selection methods on several popular and newest time series classifiers for EEG classification. The experimental results with 3 cases (3 datasets) are shown in Figure \ref{fig:case1} -- \ref{fig:case3} respectively. We can see from the experimental results that with selected EEGs by mwcEEGs, the classification performance is improved compared that without selected EEGs. Moreover, a small or a moderate \begin{small}$\delta$\end{small} yields a better classification performance than a larger \begin{small}$\delta$\end{small}. In other word, a small or moderate \begin{small}$\delta$\end{small} calculated with (\ref{delta}) achieves a high-quality EEG classification. The reason is that the larger the \begin{small}$\delta$\end{small} is, more discriminative EEGs are selected by mwcEEGs. That is to say, the selected discriminative EEGs with a larger \begin{small}$\delta$\end{small} reduce more discriminative features of EEGs and cannot represent the general EEGs, so its classification results are probably lower than that with a smaller \begin{small}$\delta$\end{small} achieves or even lower than that without selected ones. As a consequence, mwcEEGs with small or moderate \begin{small}$\delta$\end{small} yields best classification results with respect to RI, F-score, and \emph{kappa}, which indicates that mwcEEGs is superior over the state-of-the-art time series selection methods for EEG classification on several promising classifiers.

\section{Conclusion} \label{conclusion}
This paper explores Brain EEG selection. Raw EEGs without selection contains many invalid/noisy data which degrades the corresponding learning performance. Since EEG is weak, complicated, fluctuated and with low signal-to-noise, conventional time series selection methods are not applicable for EEG selection. To address this issue, a novel approach (called mwcEEGs) based on maximum weight clique is proposed to select valid EEGs. The main idea of mwcEEGs is to map EEG selection to searching a family of cliques with maximum weights simultaneously combining edge weights and vertex weights in an improved Fr\'{e}chet distance-weighted EEG graph while reducing the influence of invalid/noisy EEGs according to similarity thresholds \begin{small}$\delta$\end{small}. The experimental comparisons with the state-of-the-art time series selection methods based on different evaluation criteria on real-world EEG data demonstrate the effectiveness of the mwcEEGs for EEG selection.

\section*{Acknowledgments}
This work was partially supported by National Natural Science Foundation of China (Nos. U1433116 and 61702355), Fundamental Research Funds for the Central Universities (Grant No. NP2017208), and Funding of Jiangsu Innovation Program for Graduate Eduction (Grant No. KYZZ16\_0171).

\bibliographystyle{IEEEtran}
\bibliography{Ref}

\begin{thebibliography}{10}
\providecommand{\url}[1]{#1}
\csname url@samestyle\endcsname
\providecommand{\newblock}{\relax}
\providecommand{\bibinfo}[2]{#2}
\providecommand{\BIBentrySTDinterwordspacing}{\spaceskip=0pt\relax}
\providecommand{\BIBentryALTinterwordstretchfactor}{4}
\providecommand{\BIBentryALTinterwordspacing}{\spaceskip=\fontdimen2\font plus
\BIBentryALTinterwordstretchfactor\fontdimen3\font minus
  \fontdimen4\font\relax}
\providecommand{\BIBforeignlanguage}[2]{{%
\expandafter\ifx\csname l@#1\endcsname\relax
\typeout{** WARNING: IEEEtran.bst: No hyphenation pattern has been}%
\typeout{** loaded for the language `#1'. Using the pattern for}%
\typeout{** the default language instead.}%
\else
\language=\csname l@#1\endcsname
\fi
#2}}
\providecommand{\BIBdecl}{\relax}
\BIBdecl

\bibitem{Ref3}
E.~Barzegaran, B.~van Damme, R.~Meuli, and M.~G. Knyazeva, ``Perception-related
  eeg is more sensitive to alzheimer's disease effects than resting eeg,''
  \emph{Neurobiology of aging}, vol.~43, pp. 129--139, 2016.

\bibitem{Ref23}
K.~Samiee, P.~Kovacs, and M.~Gabbouj, ``Epileptic seizure classification of eeg
  time-series using rational discrete short-time fourier transform,''
  \emph{IEEE transactions on Biomedical Engineering}, vol.~62, no.~2, pp.
  541--552, 2015.

\bibitem{Ref9}
B.~Hordacre, N.~C. Rogasch, and M.~R. Goldsworthy, ``Commentary: Utility of eeg
  measures of brain function in patients with acute stroke,'' \emph{Frontiers
  in human neuroscience}, vol.~10, p. 621, 2016.

\bibitem{Ref11}
Y.~Jonmohamadi, G.~Poudel, C.~Innes, and R.~Jones, ``Source-space ica for eeg
  source separation, localization, and time-course reconstruction,''
  \emph{NeuroImage}, vol. 101, pp. 720--737, 2014.

\bibitem{Ref10}
R.~Jiang, H.~Fei, and J.~Huan, ``A family of joint sparse pca algorithms for
  anomaly localization in network data streams,'' \emph{IEEE Transactions on
  Knowledge and Data Engineering}, vol.~25, no.~11, pp. 2421--2433, 2013.

\bibitem{Ref27}
W.~Wu, Z.~Chen, X.~Gao, Y.~Li, E.~N. Brown, and S.~Gao, ``Probabilistic common
  spatial patterns for multichannel eeg analysis,'' \emph{IEEE transactions on
  pattern analysis and machine intelligence}, vol.~37, no.~3, pp. 639--653,
  2015.

\bibitem{Ref26}
R.~R. V{\'a}zquez, H.~Velez-Perez, R.~Ranta, V.~L. Dorr, D.~Maquin, and
  L.~Maillard, ``Blind source separation, wavelet denoising and discriminant
  analysis for eeg artefacts and noise cancelling,'' \emph{Biomedical Signal
  Processing and Control}, vol.~7, no.~4, pp. 389--400, 2012.

\bibitem{Ref18}
Y.~Liu, W.~Zhou, Q.~Yuan, and S.~Chen, ``Automatic seizure detection using
  wavelet transform and svm in long-term intracranial eeg,'' \emph{IEEE
  transactions on neural systems and rehabilitation engineering}, vol.~20,
  no.~6, pp. 749--755, 2012.

\bibitem{Ref1}
\BIBentryALTinterwordspacing
P.~K. Agarwal, R.~B. Avraham, H.~Kaplan, and M.~Sharir, ``Computing the
  discrete fr\'{E}chet distance in subquadratic time,'' in \emph{Proceedings of
  the Twenty-fourth Annual ACM-SIAM Symposium on Discrete Algorithms}, ser.
  SODA '13.\hskip 1em plus 0.5em minus 0.4em\relax Philadelphia, PA, USA:
  Society for Industrial and Applied Mathematics, 2013, pp. 156--167. [Online].
  Available: \url{http://dl.acm.org/citation.cfm?id=2627817.2627829}
\BIBentrySTDinterwordspacing

\bibitem{Ref14}
E.~J. Keogh and M.~J. Pazzani, ``Derivative dynamic time warping,'' in
  \emph{Proceedings of the 2001 SIAM International Conference on Data
  Mining}.\hskip 1em plus 0.5em minus 0.4em\relax SIAM, 2001, pp. 1--11.

\bibitem{Ref21}
F.~Petitjean, A.~Ketterlin, and P.~Gan{\c{c}}arski, ``A global averaging method
  for dynamic time warping, with applications to clustering,'' \emph{Pattern
  Recognition}, vol.~44, no.~3, pp. 678--693, 2011.

\bibitem{Ref6}
A.~Chouakria-Douzal and P.~N. Nagabhushan, ``Improved fr{\'e}chet distance for
  time series,'' in \emph{Data Science and Classification}.\hskip 1em plus
  0.5em minus 0.4em\relax Springer, 2006, pp. 13--20.

\bibitem{Ref28}
Q.~Wu and J.-K. Hao, ``A review on algorithms for maximum clique problems,''
  \emph{European Journal of Operational Research}, vol. 242, no.~3, pp.
  693--709, 2015.

\bibitem{Ref15}
J.-H. Kim, ``Estimating classification error rate: Repeated cross-validation,
  repeated hold-out and bootstrap,'' \emph{Computational statistics \& data
  analysis}, vol.~53, no.~11, pp. 3735--3745, 2009.

\bibitem{Ref5}
J.~Wu, Z.~Cai, and X.~Zhu, ``Self-adaptive probability estimation for naive
  bayes classification,'' in \emph{The 2013 International Joint Conference on
  Neural Networks (IJCNN)}, Aug 2013, pp. 1--8.

\bibitem{Ref8}
J.~Wu, S.~Pan, X.~Zhu, C.~Zhang, and P.~S. Yu, ``Multiple structure-view
  learning for graph classification,'' \emph{IEEE Transactions on Neural
  Networks and Learning Systems}, vol.~PP, no.~99, pp. 1--16, 2018.

\bibitem{Ref20}
J.~Wu, X.~Zhu, C.~Zhang, and Z.~Cai, ``Multi-instance multi-graph dual
  embedding learning,'' in \emph{2013 IEEE 13th International Conference on
  Data Mining}, 2013, pp. 827--836.

\bibitem{Ref22}
W.~M. Rand, ``Objective criteria for the evaluation of clustering methods,''
  \emph{Journal of the American Statistical association}, vol.~66, no. 336, pp.
  846--850, 1971.

\bibitem{Ref25}
D.~C. Blair, ``Information retrieval, cj van rijsbergen. london: Butterworths;
  1979: 208 pp,'' 1979.

\bibitem{Ref7}
J.~L. Fleiss, ``Measuring nominal scale agreement among many raters.''
  \emph{Psychological bulletin}, vol.~76, no.~5, p. 378, 1971.

\bibitem{Ref12}
J.~Wu, Z.~Cai, and Z.~Gao, ``Dynamic k-nearest-neighbor with distance and
  attribute weighted for classification,'' in \emph{2010 International
  Conference on Electronics and Information Engineering}, vol.~1, 2010, pp.
  V1--356--V1--360.

\bibitem{Ref13}
J.~Wu, Z.-h. Cai, and S.~Ao, ``Hybrid dynamic k-nearest-neighbour and distance
  and attribute weighted method for classification,'' \emph{Int. J. Comput.
  Appl. Technol.}, vol.~43, no.~4, pp. 378--384, Jun. 2012.

\bibitem{Ref29}
L.~Jiang, ``Learning instance weighted naive bayes from labeled and unlabeled
  data,'' \emph{Journal of intelligent information systems}, vol.~38, no.~1,
  pp. 257--268, 2012.

\bibitem{Ref4}
C.-C. Chang and C.-J. Lin, ``Libsvm: a library for support vector machines,''
  \emph{ACM transactions on intelligent systems and technology (TIST)}, vol.~2,
  no.~3, p.~27, 2011.

\bibitem{Ref17}
J.~Lines, L.~M. Davis, J.~Hills, and A.~Bagnall, ``A shapelet transform for
  time series classification,'' in \emph{Proceedings of the 18th ACM SIGKDD
  international conference on Knowledge discovery and data mining}.\hskip 1em
  plus 0.5em minus 0.4em\relax ACM, 2012, pp. 289--297.

\bibitem{Ref24}
D.~F. Silva, V.~M. De~Souza, and G.~E. Batista, ``Time series classification
  using compression distance of recurrence plots,'' in \emph{Data Mining
  (ICDM), 2013 IEEE 13th International Conference on}.\hskip 1em plus 0.5em
  minus 0.4em\relax IEEE, 2013, pp. 687--696.

\bibitem{Ref2}
A.~Bagnall, J.~Lines, J.~Hills, and A.~Bostrom, ``Time-series classification
  with cote: the collective of transformation-based ensembles,'' \emph{IEEE
  Transactions on Knowledge and Data Engineering}, vol.~27, no.~9, pp.
  2522--2535, 2015.

\bibitem{Ref19}
T.~Le~Nguyen, S.~Gsponer, and G.~Ifrim, ``Time series classification by
  sequence learning in all-subsequence space,'' in \emph{Data Engineering
  (ICDE), 2017 IEEE 33rd International Conference on}.\hskip 1em plus 0.5em
  minus 0.4em\relax IEEE, 2017, pp. 947--958.

\bibitem{Ref16}
J.~Lin, E.~Keogh, L.~Wei, and S.~Lonardi, ``Experiencing sax: a novel symbolic
  representation of time series,'' \emph{Data Mining and knowledge discovery},
  vol.~15, no.~2, pp. 107--144, 2007.

\end{thebibliography}
\end{document}